%%%%%%%% ICML 2021 EXAMPLE LATEX SUBMISSION FILE %%%%%%%%%%%%%%%%%

\documentclass{article}

% Recommended, but optional, packages for figures and better typesetting:
\usepackage{microtype}
\usepackage{graphicx}
\usepackage{subfigure}
\usepackage{booktabs} % for professional tables
\usepackage{multirow}

%%%%% NEW MATH DEFINITIONS %%%%%

\usepackage{amsmath,amsfonts,bm}
% Mark sections of captions for referring to divisions of figures

% Highlight a newly defined term

% Figure reference, lower-case.

% Figure reference, capital. For start of sentence

% Section reference, lower-case.

% Section reference, capital.

% Reference to two sections.

% Reference to three sections.

% Reference to an equation, lower-case.
\def\eqref#1{equation~\ref{#1}}
% Reference to an equation, upper case

% A raw reference to an equation---avoid using if possible

% Reference to a chapter, lower-case.

% Reference to an equation, upper case.

% Reference to a range of chapters

% Reference to an algorithm, lower-case.

% Reference to an algorithm, upper case.

% Reference to a part, lower case

% Reference to a part, upper case

\def\1{\bm{1}}

% Random variables

% rm is already a command, just don't name any random variables m

% Random vectors

% Elements of random vectors

% Random matrices

% Elements of random matrices

% Vectors

% Elements of vectors

% Matrix

% Tensor
\DeclareMathAlphabet{\mathsfit}{\encodingdefault}{\sfdefault}{m}{sl}
\SetMathAlphabet{\mathsfit}{bold}{\encodingdefault}{\sfdefault}{bx}{n}

% Graph

% Sets

% Don't use a set called E, because this would be the same as our symbol
% for expectation.

% Entries of a matrix

% entries of a tensor
% Same font as tensor, without \bm wrapper

% The true underlying data generating distribution

% The empirical distribution defined by the training set

% The model distribution

% Stochastic autoencoder distributions

 % Laplace distribution

\newcommand{\Ls}{\mathcal{L}}
\newcommand{\R}{\mathbb{R}}

% Wolfram Mathworld says $L^2$ is for function spaces and $\ell^2$ is for vectors
% But then they seem to use $L^2$ for vectors throughout the site, and so does
% wikipedia.

 % See usage in notation.tex. Chosen to match Daphne's book.

\DeclareMathOperator*{\argmin}{arg\,min}

\usepackage{amsthm}
\usepackage{amssymb}
\usepackage{enumitem}

\usepackage[algo2e,ruled,vlined,linesnumbered]{algorithm2e}

\newtheorem{theorem}{Theorem}
\newtheorem{definition}{Definition}[section]

\makeatletter
\newcommand{\subalign}[1]{%
  \vcenter{%
    \Let@ \restore@math@cr \default@tag
    \baselineskip\fontdimen10 \scriptfont\tw@
    \advance\baselineskip\fontdimen12 \scriptfont\tw@
    \lineskip\thr@@\fontdimen8 \scriptfont\thr@@
    \lineskiplimit\lineskip
    \ialign{\hfil$\m@th\scriptstyle##$&$\m@th\scriptstyle{}##$\hfil\crcr
      #1\crcr
    }%
  }%
}
\makeatother

% hyperref makes hyperlinks in the resulting PDF.
% If your build breaks (sometimes temporarily if a hyperlink spans a page)
% please comment out the following usepackage line and replace
% \usepackage{icml2021} with \usepackage[nohyperref]{icml2021} above.
\usepackage{hyperref}

% Use the following line for the initial blind version submitted for review:
% \usepackage{icml2021}

\looseness=-1

\SetKwInput{KwInput}{input}                % Set the Input
\SetKwInput{KwOutput}{return}              % set the Output

\setlength{\abovedisplayskip}{0pt}
\setlength{\belowdisplayskip}{0pt}
\setlength{\abovedisplayshortskip}{0pt}
\setlength{\belowdisplayshortskip}{0pt}

% If accepted, instead use the following line for the camera-ready submission:
\usepackage[accepted]{icml2021}

% The \icmltitle you define below is probably too long as a header.
% Therefore, a short form for the running title is supplied here:
\icmltitlerunning{Scalable Pareto Front Approximation for Deep Multi-Objective Learning}

\begin{document}

\twocolumn[
\icmltitle{Scalable Pareto Front Approximation for Deep Multi-Objective Learning}

% It is OKAY to include author information, even for blind
% submissions: the style file will automatically remove it for you
% unless you've provided the [accepted] option to the icml2021
% package.

% List of affiliations: The first argument should be a (short)
% identifier you will use later to specify author affiliations
% Academic affiliations should list Department, University, City, Region, Country
% Industry affiliations should list Company, City, Region, Country

% You can specify symbols, otherwise, they are numbered in order.
% Ideally, you should not use this facility. Affiliations will be numbered
% in order of appearance and this is the preferred way.
\icmlsetsymbol{equal}{*}

\begin{icmlauthorlist}
\icmlauthor{Michael Ruchte}{uf}
\icmlauthor{Josif Grabocka}{uf}
\end{icmlauthorlist}

\icmlaffiliation{uf}{Representation Learning Lab (RELEA), Department of Computer Science, University of Freiburg, Germany}

\icmlcorrespondingauthor{Michael Ruchte}{ruchtem@cs.uni-freiburg.de}

% You may provide any keywords that you
% find helpful for describing your paper; these are used to populate
% the "keywords" metadata in the PDF but will not be shown in the document
\icmlkeywords{Multi-Objective Optimization, Deep Learning}

\vskip 0.3in
]

% this must go after the closing bracket ] following \twocolumn[ ...

% This command actually creates the footnote in the first column
% listing the affiliations and the copyright notice.
% The command takes one argument, which is text to display at the start of the footnote.
% The \icmlEqualContribution command is standard text for equal contribution.
% Remove it (just {}) if you do not need this facility.

\printAffiliationsAndNotice{}  % leave blank if no need to mention equal contribution
%\printAffiliationsAndNotice{\icmlEqualContribution} % otherwise use the standard text.

\setcounter{footnote}{1}
\begin{abstract}
Multi-objective optimization (MOO) is a prevalent challenge for Deep Learning, however, there exists no scalable MOO solution for truly deep neural networks. Prior work either demand optimizing a new network for every point on the Pareto front, or induce a large overhead to the number of trainable parameters by using hyper-networks conditioned on modifiable preferences. In this paper, we propose to condition the network directly on these preferences by augmenting them to the feature space. %, therefore learning a single network to capture all combinations of objective preferences.
Furthermore, we ensure a well-spread Pareto front by penalizing the solutions to maintain a small angle to the preference vector. In a series of experiments, we demonstrate that our Pareto fronts achieve state-of-the-art quality despite being computed significantly faster. Furthermore, we showcase the scalability as %of our method on a large dataset with a deep state-of-the-art network.
our method approximates the full Pareto front on the CelebA dataset with an EfficientNet network at a tiny training time overhead of 7\% compared to a simple single-objective optimization. We will make our code publicly available.\footnote{https://github.com/ruchtem/cosmos}
\end{abstract}

\section{Introduction}
\label{introduction}

% Describe what MOO is and why is it important
Multi-objective optimization (MOO) is a pillar task for Machine Learning and manifests itself in numerous real-life problems, such as multi-task learning (MTL) or \textit{fair} machine learning which jointly minimizes both the classification error and at least one fairness criterion. MOO is a classical problem in operations research and is typically solved through evolutionary methods, however, it recently gained interest in the Machine Learning community thanks to the development of new gradient-based MOO algorithms that facilitate faster training times~\cite{fliege2000steepest, desideri2012mutiple}. The goal of MOO is to generate a Pareto front of non-dominating solutions, in a way that a practitioner selects a post hoc solution based on the achieved trade-offs among objectives. The Pareto front denotes a set of solutions which cannot be further optimized without sacrificing performance with respect to at least one objective, its elements are called pareto-optimal (see Section~\ref{sec:preliminary}).

The earlier attempt to tackle MOO for Deep Learning focused on addressing multi-task learning~\citep{sener2018multi} through gradient descent~\cite{desideri2012mutiple} to find a single solution on the Pareto front. Additional follow-up strategies propose to populate a set of Pareto optimal solutions by learning multiple neural networks along preference vectors~\cite{lin2019pareto, mahapatra2020multi}. Preference vectors encode the predefined relative importance for each objective. Two recent ideas proposed conditioning the network's weights to the preference vector through hyper-networks~\cite{lin2021controllable,navon2020learning}. 
A key problem with the aforementioned prior work is that they struggle to scale to deep neural networks. Learning one new network for each Pareto front solution makes it infeasible to populate a large set, while the overhead of the hyper-networks is prohibitive in terms of increasing the number of trainable parameters.

% Introduce our solution to the task and explain the mechanism

In this paper we propose a novel method that scales MOO to Deep Learning by fulfilling two important desiderata: $i)$ our method does not significantly increase the trainable parameters of the network (contrary to hyper-networks), and $ii)$ generates the full Pareto front of solutions in a single optimization run (contrary to methods that train one network per point in the Pareto front). Concretely, we propose to condition a prediction model to the choice of the preference vectors by augmenting the feature space with the preferences. As a consequence, our method learns to adapt a single network for all the trade-off combinations of the inputted preference vectors, therefore it is able to approximate all solutions of the Pareto front after a single optimization run. We follow the linear scalarization variant of MOO, which although is the fastest variant for gradient-based learning, does not produce a well-spread Pareto front~\cite{mahapatra2020multi,navon2020learning,lin2019pareto}. To remedy this disadvantage, we propose a novel penalty term that forces the achieved solutions in the objectives' space to maintain a small angle to the inputted preference vector.

In a series of experiments, we demonstrate that our method competes strongly with the state-of-the-art MOO methods in terms of quality while being significantly faster. To further showcase the efficiency of our technique, we demonstrate that it is feasible to efficiently approximate the Pareto front by fine-tuning a deep network on a large-scale multi-objective problem within less than 4 GPU hours. To sum up, this paper introduces the following contributions:

\begin{enumerate}
\item A novel method for MOO in Deep Learning that needs a single optimization run to approximate the full set of the Pareto front;
\item Conditioning the model to the preference vectors on the feature space, removing the overhead of hyper-networks on the number of trainable parameters;
\item A novel penalty term to enforce that the Pareto front achieved through linear scalarization is well-spread across the objectives' space;
\item Experimental results showing our method is an efficient MOO approach that approximates a Pareto front for deep models on large multi-objective datasets at negligible training time overhead of 7\%  compared to a single-objective optimization. 
\end{enumerate}

\section{Related Work}
\label{relatedwork}

\textbf{Multi-Task Learning} refers to learning a prediction model for solving multiple tasks jointly and is applicable in various application domains~\cite{zhang2018survey}, for instance in detecting facial landmarks on images~\cite{10.1007/978-3-319-10599-4_7}. Tasks can have competing objectives, and finding the right balance for prioritizing them is challenging. A prior work explores mechanisms for deciding which tasks should be prioritized~\cite{standley2020which}. Furthermore, the adaptive load balancing of competing losses has also attracted interest~\cite{pmlr-v80-chen18a}. On the other hand, a tailored architecture for addressing multi-task involved task-specific feature-level attention \cite{8954221}. Besides, a prior work explores the direction of modeling uncertainty for multi-task learning~\cite{8578879}.

\textbf{Multi-Objective Optimization} formalizes the problem of learning from multiple objectives as discovering a set of Pareto optimal solutions~\cite{Kais99,10.5555/1121732} expressing trade-offs between the objectives (see Section~\ref{sec:preliminary}). Classical approaches follow genetic algorithms that search for populations of Pareto optimal solutions~\cite{10.1007/3-540-45356-3_83}, however, evolutionary algorithms are not scalable for deep learning in an off-the-shelf manner. 

A notable method that explores gradient-based learning for multi-objective optimization is the Multiple-Gradient-Descent-Algorithm (MGDA)~\cite{desideri2012mutiple}. Actually, one of the first papers to treat multi-task learning as multi-objective optimization used the MGDA method for training a single Pareto stationary solution~\cite{sener2018multi}. Follow-up techniques extended the idea towards learning a Pareto front of multiple solutions by aligning solutions according to preference rays~\cite{lin2019pareto,mahapatra2020multi}. Unfortunately, these approaches train one neural network from scratch for each point on the Pareto front. A strategy to speed-up the generation of Pareto stationary points explores new solutions (networks) by taking different directions in the multi-objective space and transfer-learning the weights of past solutions~\cite{pmlr-v119-ma20a}.

Orthogonal to MTL is the application of MOO to fairness objectives which gained interest only recently \cite{valdivia2020fair, padh2020addressing}. The key difference is that in fairness all objectives are based on the same output thus enforcing their trade-offs \cite{menon2018cost}.

The idea of training a single network that is conditioned on a particular task has been first elaborated in the context of multi-task learning~\cite{Dosovitskiy2020You}. Extensions to the case of multi-objective optimization with Pareto fronts have utilized hyper-networks which output a preference-dictated prediction model~\cite{lin2021controllable,navon2020learning}. Hyper-networks are trained to predict the weights of another network conditioned on some input \cite{ha2016hypernetworks}. While our method shares the same style of conditioning the model on the objective preferences, we delineate in two key points: firstly, our solution conditions the feature space instead of the parameter space, and consequently is not limited by the overhead of a hyper-network. Secondly, we propose a novel penalty term that improves the quality of Pareto fronts trained through linear scalarization.

\textbf{Deep Multi-Objective Optimization} relies on applying the aforementioned ideas to the case of deep neural networks. MOO has been a topic of interest for related sub-problems, such as reinforcement learning~\cite{10.5555/3176748.3176753, 10.5555/2627435.2750356, DBLP:conf/nips/YangSN19}, or neural architecture search~\cite{elsken2018efficient}. However, we assess that prior work can not be easily deployed to state-of-the-art deep networks, because they either need to train multiple networks for populating the Pareto front or are limited by their dependence on parameter-heavy hyper-networks. In contrast, our method (Section~\ref{proposedmethod}) produces a Pareto front within a time complexity comparable to a simple single-objective optimization with one network.

\section{Preliminary}
\label{sec:preliminary}

Let us denote a supervised dataset as $D:=\left\{\left(x_n, y_n\right)\right\}_{n=1}^N$, where $x_n \in \mathcal{X}, y_n \in \mathcal{Y}, \forall n \in \left\{1,\dots, N\right\}$. We are asked to train a neural network $f(x,\theta): \mathcal{X} \times \Theta \rightarrow \mathcal{Y}$ parameterized by $\theta \in \Theta$ by jointly minimizing a set of $J$ loss functions $\Ls_j: \mathcal{Y} \times \mathcal{Y} \rightarrow \R_{>0}, \forall j \in \left\{1, \dots, J\right\}$ as formulated in the \textbf{multi-objective} problem of  Equation~\ref{eq:moo}. Each objective focuses on minimizing the expectation of the respective loss function over labeled instances $x, y$ drawn from a data sampling distribution $p_D$. 

\begin{align}
    \label{eq:moo}
    \nonumber
    \theta^{*} := \argmin_{\theta} \mathbb{E}_{\left(x,y\right) \sim p_D} \left\{ \right. &\Ls_1\left(y, f\left(x; \theta \right)\right),  \\
    \nonumber
    &\Ls_2\left(y, f\left(x; \theta \right)\right), \\ 
    \nonumber
    &\dots,  \\ 
     &\Ls_J\left(y, f\left(x; \theta \right)\right) \left.\right\}
\end{align}

In general, no single solution $\theta^{*}$ achieves the optimum of all objectives, however, we can obtain a set of Pareto optimal solutions according to the following definitions. For ease of notation, let us denote an objective as $\mathcal{O}_j(\theta) := \mathbb{E}_{\left(x,y\right) \sim p_D} \Ls_j\left(y, f\left(x; \theta\right)\right), \; \forall j \in \left\{1, \dots, J\right\}$. 

\smallskip

\theoremstyle{definition}
\begin{definition}[Pareto dominance] A solution $\theta \in \Theta$ dominates another solution $\theta' \in \Theta$ (denoted as $\theta \prec \theta'$) when both: \textbf{i)} $\theta$ is not worse than $\theta'$ on any objective, i.e.: $\mathcal{O}_j(\theta) \le \mathcal{O}_j(\theta'), \;  \forall j \in \left\{1, \dots. J\right\}$, and: \textbf{ii)} $\theta$ is better than $\theta'$ on at least one objective, i.e.: $\exists k \in \left\{1, \dots, J\right\} \text{ s.t. } \mathcal{O}_k(\theta)  < \mathcal{O}_k(\theta')$.
\label{def:paretodominance}
\end{definition}

\theoremstyle{definition}
\begin{definition}[Pareto optimality] A solution $\theta \in \Theta$ is Pareto optimal if it is not dominated by any other solution. Therefore, the set of all Pareto optimal solutions is defined as $\mathcal{P} := \left\{\theta \in \Theta \; | \; \nexists  \theta' \in \Theta: \; \theta' \prec \theta \right\}$. Meanwhile, the Pareto front $\mathcal{F}$ is the $J$-dimensional manifold of the objective values of all Pareto optimal solutions $\mathcal{F} := \left\{\mathcal{O}(\theta) \in \R_{>0}^J \; | \; \theta \in \mathcal{P} \right\}$.
\label{def:paretooptimality}
\end{definition}

The problem of multi-objective optimization (MOO) can be treated as single-objective optimization through the \textbf{linear scalarization} problem of Equation~\ref{eq:mools}, given a preference vector $r \in \R_{>0}^J$.

\begin{align}
    \label{eq:mools}
    \theta^{*}_r := \argmin_{\theta} \; \mathbb{E}_{\left(x,y\right) \sim p_D} \;\;  \sum_{j=1}^{J} r_j \; \Ls_j\left(y, f\left(x; \theta\right)\right)
\end{align}

\begin{theorem}
The optimal solution $\theta^{*}_r$ of Equation~\ref{eq:mools} is Pareto-optimal for any given $r \in \R_{>0}^J$.
\end{theorem}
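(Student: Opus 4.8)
The plan is to argue by contradiction using the two definitions just stated, never touching the structure of $f$ or the loss functions themselves. I would assume that $\theta^{*}_r$ minimizes the scalarized objective of Equation~\ref{eq:mools} and yet fails to be Pareto optimal, and then derive a contradiction with that minimality.

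First I would unpack the negation of Pareto optimality. By Definition~\ref{def:paretooptimality}, if $\theta^{*}_r$ is not Pareto optimal then there exists some $\theta' \in \Theta$ with $\theta' \prec \theta^{*}_r$. Invoking the dominance relation of Definition~\ref{def:paretodominance}, this gives two facts at once: $\mathcal{O}_j(\theta') \le \mathcal{O}_j(\theta^{*}_r)$ for every $j \in \{1,\dots,J\}$, and $\mathcal{O}_k(\theta') < \mathcal{O}_k(\theta^{*}_r)$ for at least one index $k$.

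Next I would compare the scalarized values at $\theta'$ and $\theta^{*}_r$ by isolating the $k$-th term from the rest of the weighted sum:
\begin{align*}
\sum_{j=1}^{J} r_j \, \mathcal{O}_j(\theta') = r_k \, \mathcal{O}_k(\theta') + \sum_{j \ne k} r_j \, \mathcal{O}_j(\theta').
\end{align*}
Since every component of $r \in \R_{>0}^J$ is strictly positive, the weak inequalities multiply through to $r_j \, \mathcal{O}_j(\theta') \le r_j \, \mathcal{O}_j(\theta^{*}_r)$ for each $j \ne k$, while the strict inequality on objective $k$ together with $r_k > 0$ yields $r_k \, \mathcal{O}_k(\theta') < r_k \, \mathcal{O}_k(\theta^{*}_r)$. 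Summing these then gives $\sum_j r_j \, \mathcal{O}_j(\theta') < \sum_j r_j \, \mathcal{O}_j(\theta^{*}_r)$, which contradicts the assumption that $\theta^{*}_r$ is a minimizer of the scalarized loss; hence $\theta^{*}_r$ must be Pareto optimal.

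I expect the only delicate point to be the explicit use of the strict positivity of $r$: it is precisely what promotes the single strict dominance inequality on objective $k$ into a strict inequality for the entire weighted sum. Were any weight permitted to vanish, the improvement on objective $k$ could be annihilated and the contradiction would collapse, which is why the hypothesis $r \in \R_{>0}^J$ (rather than merely $r \ge 0$) is essential. Note that no continuity or differentiability of the objectives is invoked, so the argument is purely order-theoretic and holds verbatim for the expectations $\mathcal{O}_j$ as defined.
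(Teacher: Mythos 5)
Your proof is correct and follows essentially the same route as the paper's: assume a dominating $\theta'$ exists, use the strict positivity of $r$ to turn the dominance inequalities into a strict inequality on the weighted sum, and contradict the minimality of $\theta^{*}_r$. You merely spell out the term-by-term decomposition more explicitly than the paper does, and your closing remark on why $r \in \R_{>0}^J$ (rather than $r \ge 0$) is essential is a worthwhile clarification the paper leaves implicit.
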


\begin{proof}
We can show that no another solution $\theta'$ dominates $\theta^{*}_r$ using a commonly known proof. If there exists a different solution $\theta'$ that dominates the optimal linear scalarization solution $\theta^{*}$, then by virtue of the Pareto optimality definition $\forall j \in \left\{1, \dots. J\right\}: \mathcal{O}_j(\theta')  \le \mathcal{O}_j(\theta^{*}_r)$ and $\exists k \in \left\{1, \dots. J\right\}: \mathcal{O}_k(\theta')  < \mathcal{O}_k(\theta^{*}_r)$. If these optimality conditions hold and knowing $r \in \R_{>0}^J$, then  $\sum_{j=1}^{J} r_j \mathcal{O}_j(\theta') < \sum_{j=1}^{J} r_j \mathcal{O}_j(\theta^{*}_r)$ must be true, which is not possible because $\theta^{*}_r$ is the optimal solution of Equation~\ref{eq:mools}. Therefore, the above Pareto optimality conditions can not hold and a dominating $\theta'$ does not exist. 
\end{proof}

However, the loss functions $\Ls_j: \mathcal{Y} \times \mathcal{Y} \rightarrow \R_{>0}$, $j \in \left\{1, \dots. J\right\}$ are non-convex with respect to $\theta \in \Theta$ in the case when $f(x,\theta): \mathcal{X} \times \Theta \rightarrow \mathcal{Y}$ is a neural network. Therefore, neural network weights $\theta^{*}_r$ computed by first-order optimization approaches are not the optimal solution of Equation~\ref{eq:mools}. As a result, optimizing neural networks through the linear scalarization approach yields approximative Pareto-optimal solutions. Consequently, we can create an approximation to the Pareto optimal set of solutions by computing the optimal parameters $\theta^{*}_r$ of Equation~\ref{eq:mools} for varying preference vectors $r$ as $\hat{\mathcal{P}} := \left\{ \theta^{*}_r \in \Theta \; | \; r \in \R_{>0}^{J} \right\}$. Unfortunately, such a strategy requires repeating Equation~\ref{eq:mools} multiple times with randomly sampled preference vectors $r$, which is computationally intractable as $|\hat{\mathcal{P}}| \gg 1$.

\section{Proposed Method}
\label{proposedmethod}

We propose to approximate the Pareto front via a one-shot optimization procedure. Instead of solving multiple linear scalarization problems (Equation~\ref{eq:mools}) with different sampled $r$, we condition the predictions of the neural network to the preference vectors as $f(x, r, \theta): \mathcal{X} \times \R_{>0}^J \times \Theta \rightarrow \mathcal{Y}$. Practically, we concatenate the input $x$ with the vector $r$ and train a neural network on this joint feature space. In that manner, we can learn a single network $f$ whose predictions are optimized to achieve the Pareto front solution for the respective inputted preference vectors. The objective for optimizing the conditioned model is shown in Equation~\ref{eq:oneshotmoo}. We restrict the choice of $r \in \left[0,1\right]^J, \sum_{j=1}^{J} r_j=1$ by sampling from a Dirichlet distribution controlled with $\alpha \in \R_{>0}^J$.

\begin{align}
    \label{eq:oneshotmoo}
    \theta^{*} := \argmin_{\theta} \; \mathbb{E}_{\;\subalign{ &\bm{r }\sim \text{Dir}\left(\alpha \right) \\ &\left(x,y\right) \sim p_D}} \;  \sum_{j=1}^{J} r_j \; \Ls_j\left(y, f\left(x, \bm{r}; \theta\right) \right)
\end{align}

Let us denote any objective of the conditioned neural network with parameters $\theta$ given $r$ as $\mathcal{O}_j(\theta, r) := \mathbb{E}_{\left(x,y\right) \sim p_D} \Ls_j\left(y, f\left(x, r; \theta\right)\right), \forall j \in \left\{1, \dots, J\right\}$ and the cumulative objective vector as $\mathcal{O}(\theta, r)  \in \R_{>0}^J$. Once we compute the optimal solution $\theta^{*}$ of Equation~\ref{eq:oneshotmoo}, we can afterwards effortlessly approximate the Pareto front as formulated in Equation~\ref{eq:oneshotparetofront}. 

\begin{align}
\label{eq:oneshotparetofront}
\hat{\mathcal{F}} := \left\{\mathcal{O}(\theta^{*}, r) \in \R_{>0}^J \; | \; r \sim \text{Dir}\left(\alpha \right) \right\}
\end{align}

The advantage of our approximation method is that creating the Pareto front demands a single conditioned neural network with different randomly sampled $r$ vectors, which is more efficient than optimizing one new network from scratch for every sampled $r$. To generate the Pareto front we need just one gradient-based optimization run on the linear scalarization of Equation~\ref{eq:oneshotmoo}.

Unfortunately, although the linear scalarization approach produces Pareto-optimal solutions, still the Pareto Front is very narrow, a phenomenon witnessed by multiple authors~\cite{mahapatra2020multi,navon2020learning,lin2019pareto}. Prior work remedy this behavior by forcing the solutions to collocate with a wide set of predefined rays on the loss space~\cite{lin2019pareto,mahapatra2020multi}.

We propose a novel penalty term for forcing the solution to obey to the preference vector $r$ by minimizing the angle between $r$ and the vector of losses $\vec \Ls\left(x, y, r; \theta\right) := \left[ \left( \Ls_j\left(y, f\left(x, r; \theta\right) \right) \right)_{j=1}^J \right]$. Such a desired effect can be modeled by maximizing the cosine similarity between the preference vector and the vector of losses as $\frac{r^T \vec \Ls\left(x, y, r; \theta\right) }{||r|| \; ||\vec \Ls\left(x, y, r; \theta\right) ||}$.

\begin{align}
    \label{eq:oneshotmoopenalty}
     \nonumber
    \theta^{*} := \argmin_{\theta} \; \mathbb{E}_{\;\subalign{ &r \sim \text{Dir}\left(\alpha\right) \\ &\left(x,y\right) \sim p_D}} \;\;\; & r^T \vec \Ls\left(x, y, r; \theta\right) \\
    - \lambda & \frac{r^T \vec \Ls\left(x, y, r; \theta\right) }{||r|| \; ||\vec \Ls\left(x, y, r; \theta\right) ||} 
\end{align}

The resulting objective function is shown in Equation~\ref{eq:oneshotmoopenalty}, where we both minimize the losses through the term $r^T \vec \Ls\left(x, y, r; \theta\right)$, as well as maintaining a well-spread Pareto front through maximizing the cosine similarity. We can control the magnitude of the cosine similarity regularization via a penalty hyperparameter $\lambda \in \R_{>0}$.

\begin{algorithm2e}[ht!]
\DontPrintSemicolon
\KwInput{Labeled dataset: $\left\{\left(x_n, y_n\right)\right\}_{n=1}^N$, Loss functions: $\left\{ \Ls_j \right\}_{j=1}^J$, Learning rate scheduler: $\eta: \mathbb{N} \rightarrow \R_{>0}$, Number of steps: $K$, Penalty: $\lambda \in \R_{>0}$, Sampling: $\alpha \in \R_{>0}^J$.}
 \medskip
 \text{Initialize} $\theta \in \Theta$ \;
 \medskip
 \For{$i=1,\dots,K$}{
 \medskip
  $r \sim \text{Dir}\left(\alpha \right)$ \; 
  \medskip
  $\left(x,y\right) \sim p_D$ \;
  \medskip
  $\vec \Ls\left(x, y, r; \theta\right) := \left[ \left( \Ls_j\left(y, f\left(x, r; \theta\right) \right) \right)_{j=1}^J \right]$ \;
  \medskip
  $g := \nabla_\theta \left[ r^T \vec \Ls\left(x, y, r; \theta\right) - \lambda \frac{r^T \vec \Ls\left(x, y, r; \theta\right) }{||r|| \; ||\vec \Ls\left(x, y, r; \theta\right) ||} \right]$ \;
  \medskip
  $\theta \leftarrow \theta - \eta(i) \; g$ \;
 }
 \KwOutput{$\theta$}
 \caption{Efficient One-shot MOO}
 \label{alg:moo}
\end{algorithm2e}

The actual gradient-based optimization procedure for the conditioned prediction model is detailed in Algorithm~\ref{alg:moo}, where the procedure iterates through a sequence of mini-batches $(x,y)$ and at every step draws a random preference vector $r$ from a Dirichlet distribution. The conditioned neural network parameters are updated using the gradient of Equation~\ref{eq:oneshotmoopenalty} on the mini-batch objectives given the sampled preference vector.

In this manner, optimizing our method has an algorithmic complexity independent of $J$. %of $O(K \times |\theta|)$, for $K$ being the number of mini-batches.
In particular, it is worth highlighting that we need only a single back-propagation through the neural network $f$ for the sum of the $J$ losses, not $J$ back-propagations for every loss. Therefore, the proposed method is asymptotically as fast as learning a simple single-objective problem. Hence, we offer a \textit{free lunch} multi-objective optimization for deep learning from a runtime perspective.   

%In order to make the mechanism of our method clear, we provide an intuition with a conceptual example: fair classification. This illustrative task demands the joint minimization of an error objective  $\mathcal{O}^{(\text{Error})}$, as well as an unfairness objective $\mathcal{O}^{(\text{Unfair})}$. For a linear scalarization approach we control the impact of each objective with the corresponding weights $r^{(\text{Error})} \in \left[0,1\right], r^{(\text{Unfair})} \in \left[0,1\right]$ where $r^{(\text{Error})} +r^{(\text{Unfair})}=1$. Assume we optimize Equation~\ref{eq:illustration} for any given pair of $r$ values and a conditioned prediction model.

%\begin{align}
%    \label{eq:illustration}
%    \nonumber
%    \argmin_{\theta} &r^{(\text{Error})} \mathcal{O}^{(\text{Error})}\left( f\left(\theta, [r^{(\text{Error})}, r^{(\text{Unfair})}]\right)\right) \\ 
%    + &r^{(\text{Unfair})} \mathcal{O}^{(\text{Unfair})}\left(f\left(\theta, [r^{(\text{Error})}, r^{(\text{Unfair})}]\right)\right)
%\end{align}

%It is trivial to show that the predictions of the neural network $f$ will be only for fairness if $$

\begin{figure*}[ht!]
    \centering
    \includegraphics[width=0.95\textwidth]{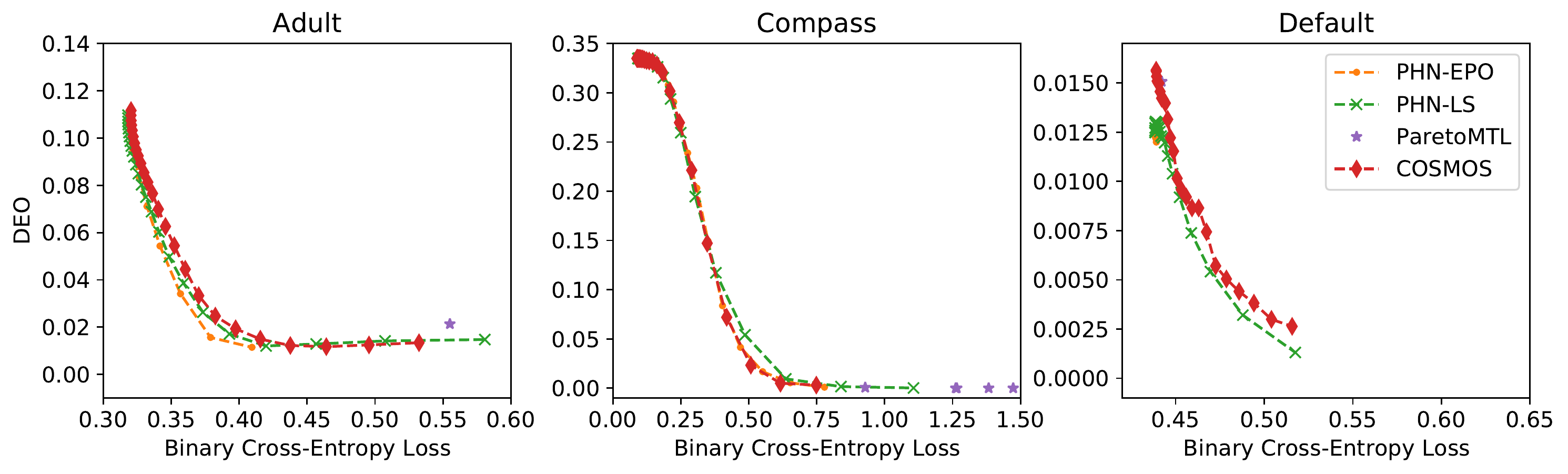}
    \includegraphics[width=0.95\textwidth]{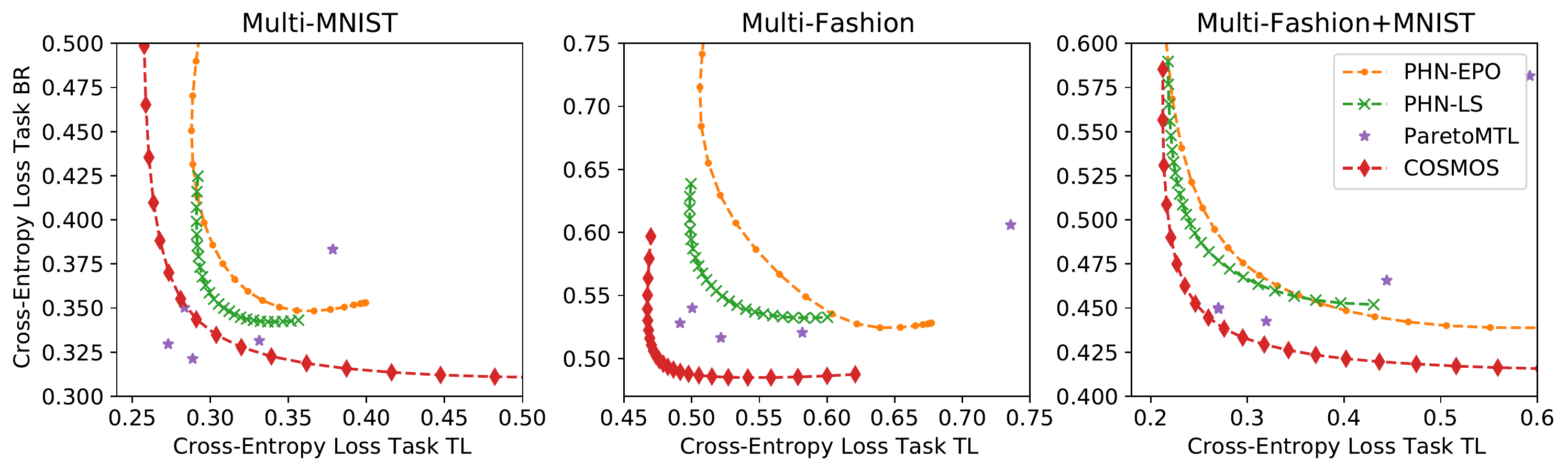}
    \vspace{-0.3cm}
    \caption{Comparison of the Pareto fronts on fairness and image classification datasets against SOTA baselines}
    \label{fig:vssota}
\end{figure*}

\section{Experimental Protocol}
\label{experimentalprotocol}

\subsection{Research Hypotheses and Designed Experiments}

We analyse our method (named COSMOS\footnote{COSMOS:  \textit{Conditioned One-shot Multi-Objective Search}}) along the following hypotheses:

\begin{itemize}
    \item \textbf{Hypothesis 1:} Does our method generate qualitative Pareto fronts compared to the state-of-the-art techniques while being faster in terms of training time? \\ \textbf{Experiment 1:} We compare our method against state of the art baselines: PHN-LS, PHN-EPO \cite{navon2020learning}, ParetoMTL \cite{lin2019pareto}, which represent MOO baselines that are able to generate Pareto fronts with gradient-based methods. We use two sets of publicly available multi-objective datasets: in the realm of fair classification (Adult, Compass, and Default) and image classification (three variants of Multi-MNIST).
    \item \textbf{Hypothesis 2:} Does our proposed MOO approach compete strongly against single-objective baselines? \\  \textbf{Experiment 2:} We compare our Pareto fronts against single-objective baselines on the datasets mentioned in Experiment 1.
    \item \textbf{Hypothesis 3:} Does the proposed cosine similarity penalty term achieve a wider-spread Pareto front? \\ \textbf{Experiment 3:} We perform an ablation study on different values of $\lambda$ and $\alpha$ to understand its effect on the generated Pareto front for the image classification datasets.
    \item \textbf{Hypothesis 4:} Can our approach scale to large datasets and optimize state of the art deep neural networks? \\ \textbf{Experiment 4:} We train EfficientNet-B4 on the larger CelebA dataset and demonstrate the quality of the achieved Pareto front as well as the training time.
\end{itemize}

\subsection{Experimental Setup}

The experiments aim at demonstrating the scalability as well as the quality of the results of our method. Unfortunately, these two requirements are mutually infeasible. If the state-of-the-art methods cannot scale to very deep models, then we cannot assess the comparative quality of the Pareto fronts generated by COSMOS against the prior work on the large-scale MOO setup. As a solution, we follow the strategy below:

\begin{enumerate}[leftmargin=*]
    \item We show the quality of the Pareto fronts generated by COSMOS \textit{on the identical small-scale experimental setup} as in the published papers of the baselines. We clarify that all the three baselines (PHN-LS, PHN-EPO, ParetoMTL) used a toy-scaled LeNet architecture for the published experiments with small-size datasets (Multi-Mnist and Multi-Fashion).  
    \item After we show the quality of the Pareto fronts generated by COSMOS versus the state-of-the-art on the small-scale setup, then we demonstrate the \textit{scalability} of COSMOS on the larger CelebA dataset with a deep EfficientNet model. It is important to highlight that none of the baselines can scale to this experiment. ParetoMTL needs to train one different EfficientNet network for each point on the Pareto front, while PHN-LS and PHN-EPO increase the number of trainable parameters to approximately $100 \times$ that of an EfficientNet architecture.
\end{enumerate}

\begin{table*}[htb!]
\caption{Results compared to the state-of-the-art methods (HV: hyper-volume, Training time in sec)}
\centering
\begin{tabular}{l rr | rr | rr | rr | rr | rr }
\toprule
             \multirow{2}{*}{\bf Method}    & \multicolumn{2}{c}{\bf Adult} & \multicolumn{2}{c}{\bf Compass} & \multicolumn{2}{c}{\bf Default} & \multicolumn{2}{c}{\bf Multi-MNIST} & \multicolumn{2}{c}{\bf Multi-Fashion} & \multicolumn{2}{c}{\bf Fash.+MNIST}  \\ \cmidrule{2-13}
                & \multicolumn{1}{c}{\bf HV} & \multicolumn{1}{c}{\bf Time} & \multicolumn{1}{c}{\bf HV} & \multicolumn{1}{c}{\bf Time} & \multicolumn{1}{c}{\bf HV} & \multicolumn{1}{c}{\bf Time} & \multicolumn{1}{c}{\bf HV} & \multicolumn{1}{c}{\bf Time} & \multicolumn{1}{c}{\bf HV} & \multicolumn{1}{c}{\bf Time} & \multicolumn{1}{c}{\bf HV} & \multicolumn{1}{c}{\bf Time} \\ \midrule 
%Single Task    & \textbf{3.36}        & 48     & \textbf{3.81} & 16     &  \textbf{3.12}        & 34 & 2.85       & 391 & 2.23 & 421 & 2.77 & 398 \\ 
Single Task    & -        & 48     & - & 16     &  -       & 34 & 2.85       & 391 & 2.23 & 421 & 2.77 & 398 \\ 
PHN-EPO    & \textbf{3.34}      & 59     &    \textbf{3.71}       &  17 & 3.11 & 27 & 2.83 & 1554 & 2.20 & 1852 & 2.78 & 1715  \\ 
PHN-LS    & \textbf{3.34}    & 23 & \textbf{3.71} & 7 & \textbf{3.12} & 18 & 2.83 & 636 & 2.20 & 700 & 2.76 & 723 \\ 
ParetoMTL    & 2.90    & 539 & 2.15 & 83 & 3.10 & 334 & 2.90 & 4087 & 2.24 & 4210 & 2.72 & 4142 \\ \midrule
\textbf{COSMOS}    & \textbf{3.34}        & 31 & \textbf{3.72} & 17 & \textbf{3.12} & 35 & \textbf{2.94} & 501 & \textbf{2.32} & 379 & \textbf{2.83} & 498 \\ 
\bottomrule
\end{tabular}
\label{tab:results_fair_mnist}
\end{table*}

\begin{figure*}[htb!]
    \centering
    \includegraphics[width=0.95\textwidth]{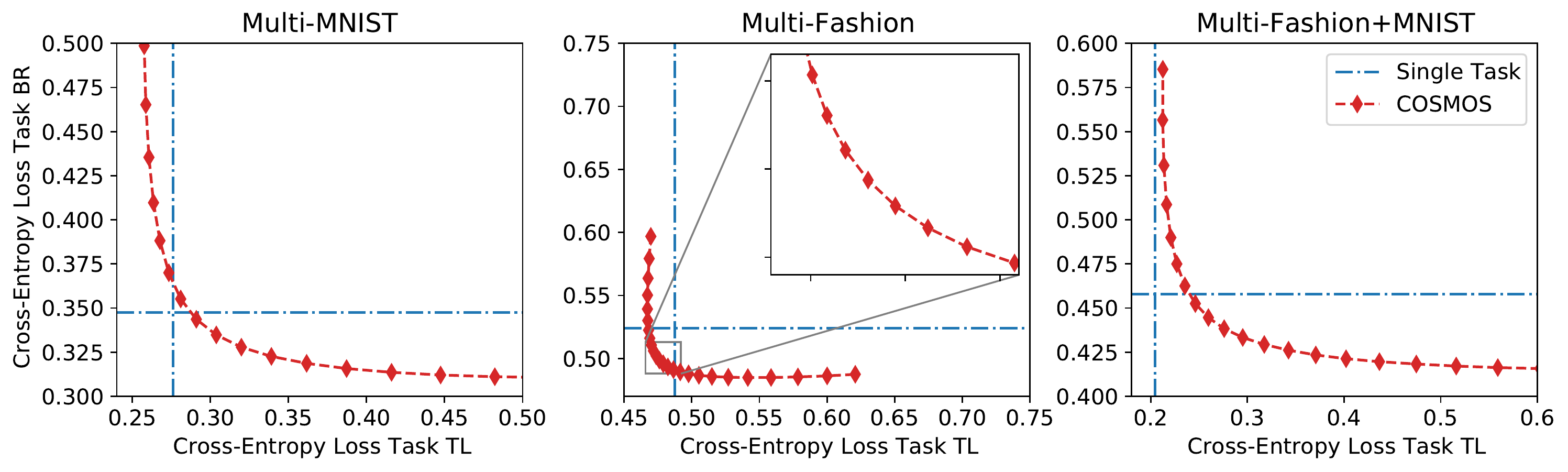}
    \vspace{-0.3cm}
    \caption{Comparison of COSMOS against single-objective baselines}
    \label{fig:vssingle}
\end{figure*}

\subsection{Reproducibility}
In terms of reproducibility, we set the hyperparameters for the baselines following the values defined in their papers and official implementations (details in the appendices). To make the baselines even more competitive, we perform early stopping using hyper-volume~\cite{zitzler2007hypervolume} computed on the validation set. Unless mentioned otherwise we set the batch size to 256 and use Adam~ \cite{DBLP:journals/corr/KingmaB14} with a learning rate of $10^{-3}$. We report the mean scores and Pareto fronts over 5 independent runs for each method and dataset, apart from CelebA. We use (2, 2) as the reference point for hyper-volume and 25 equally distributed test rays.

\textbf{Fairness on tabular data.}
We preprocess the data of the Adult \cite{Dua2019}, Compass \cite{Angwin2016}, and Default \cite{yeh2009comparisons} datasets and select ``sex'' as the binary sensible attribute denoted with $a$ resulting in $D_\text{fair} := \{\left(x_n, a_n, y_n \right)\}^N_{n=1}$ with $a_n \in \{0, 1\}$. As a differentiable fairness objective we use the hyperbolic tangent relaxation of Difference of Equality of Opportunity ($\widehat{\text{DEO}}$) \cite{padh2020addressing} defined as:

\begin{align}
    \widehat{\text{DEO}} = \frac{1}{N} \sum_{\substack{a=0\\ y=1}} t\left(f(\cdot);c\right) - \frac{1}{N} \sum_{\substack{a=1\\ y=1}} t\left(f(\cdot);c\right)
\end{align}

where $t(x; c)$ denotes $\tanh \left( c \cdot \max \left (0, x \right) \right)$ and set $c=1$ for all experiments.

In compliance with the setup of prior work~\cite{navon2020learning}, we train a Multi-Layer Perceptron with two hidden layers (60 and 25 dimensions) and ReLU activation for 50 epochs. The binary cross-entropy and $\widehat{\text{DEO}}$ represent the multi-objective losses. Furthermore, we use 70\% training, 10\% validation, 20\% test splits, and set $\lambda=0.01$ and $\alpha_1 = \alpha_2 = 0.5$. Due to the high difference in scales in Default dataset we set $\alpha_1 = 0.1$, i.e. sampling more in the vertex of the cross-entropy loss.

\begin{figure*}[htb!]
    \centering
    \includegraphics[width=0.95\textwidth]{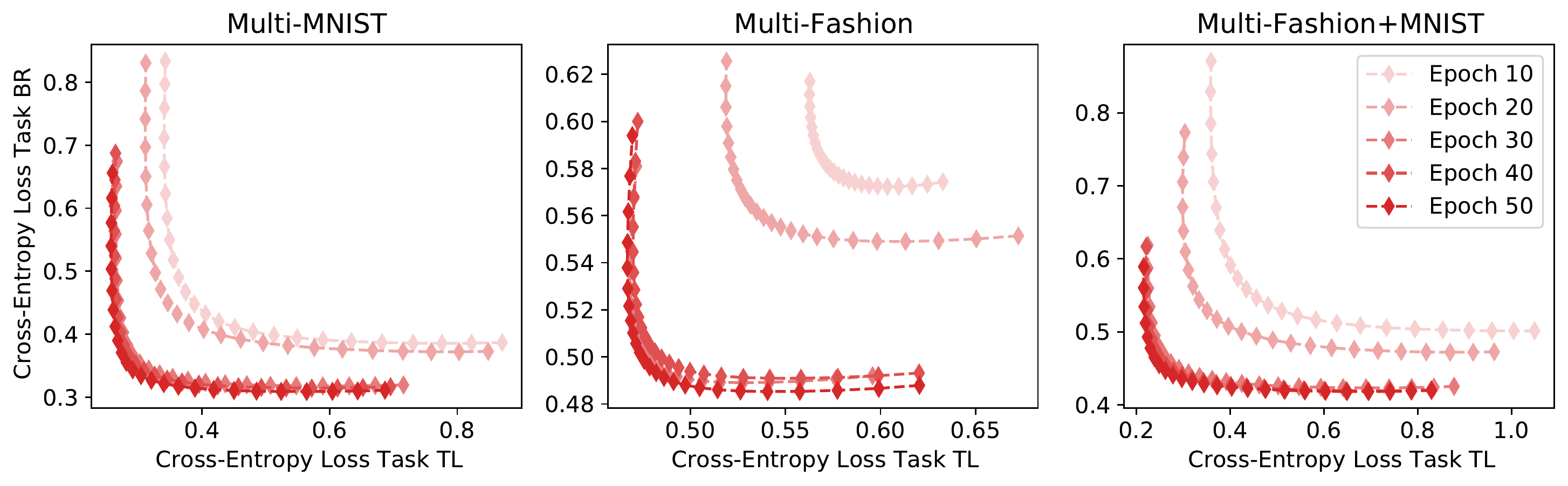}
    \vspace{-0.3cm}
    \caption{Convergence of the Pareto fronts generated by COSMOS on the image classification datasets}
    \label{fig:convergence}
\end{figure*}

\begin{figure*}[htb!]
    \centering
    \includegraphics[width=0.98\textwidth]{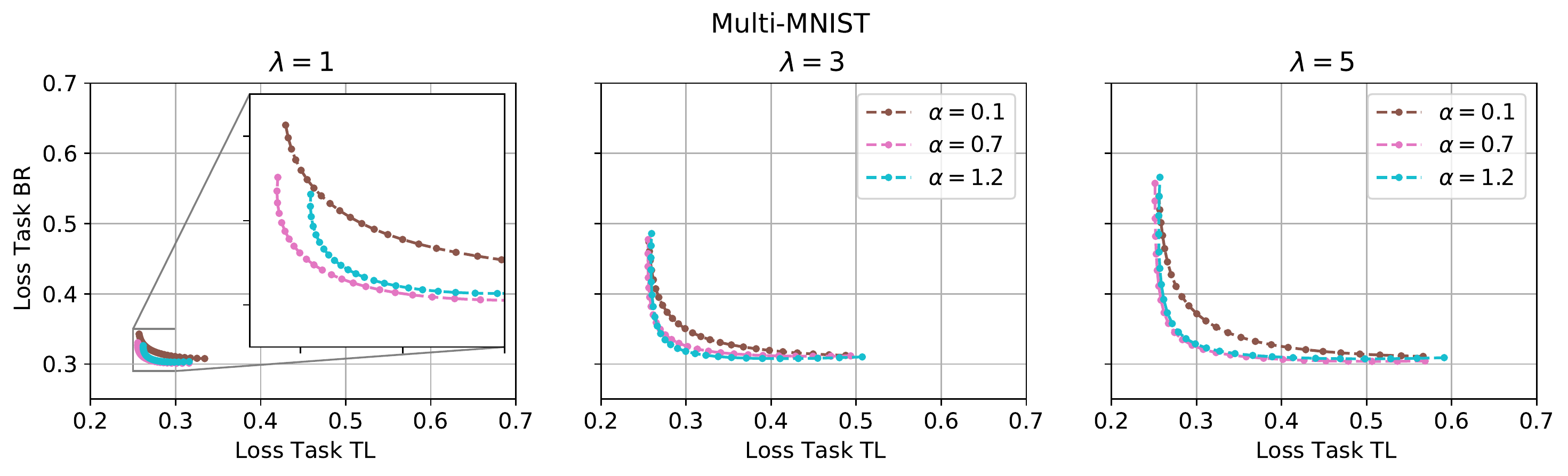}
    \vspace{-0.3cm}
    \caption{Ablation of the impact of the cosine similarity penalty and the Dirichlet sampling parameter on Multi-MNIST}
    \label{fig:ablation}
\end{figure*}

\textbf{Multi-MNIST.} The Multi-MNIST datasets are constructed by overlaying two digits with a slight offset to the bottom right (BR) and top left (TL). The MTL problem is predicting the correct class for both instances at the same time. For a detailed description of the Multi-MNIST dataset and its variants see \citealp{sabour2017dynamic} and \citealp{lin2019pareto}. We use LeNet \cite{lecun1999object} with task-specific heads similar to prior work \cite{sener2018multi} and decay the learning rate at epochs 20, 40, 80, 90 by $0.1$. As losses we define cross-entropy for the BR and TL tasks. We use 10\% of the training data as validation split, and set $\lambda=8$ and $\alpha_1 = \alpha_2 = 1.2$, apart from Multi-Fashion where $\lambda = 2$. For fusing the image $x$ and the preference vector $r$ we transform $r$ to the image space using transposed convolutions and then concatenate the latent vector representation with the image. Concretely, we feed $r$ to a transposed CNN with 2 layers with $J$ hidden dimensions, kernel size $4$ and $6$, and ReLU activation. This results in feature maps of size $J \times 10 \times 10$ which we then upsample to $J \times 36 \times 36$ and append as channels to the image. Because $J=2$, the fusion of $x$ and $r$ yields a $3 \times 36 \times 36$ augmented input to LeNet. 

\textbf{CelebA.} We rescale the images of CelebA~\cite{liu2015faceattributes} to $64 \times 64$ and follow the predefined splits. The deployed prediction model is an ImageNet-pretrained EfficientNet-B4 network~\cite{pmlr-v97-tan19a} (18m parameters) with a single-neuron output layer per objective. We set the learning rate to $5\times10^{-4}$ and batch size to 32. Overall, the fusion of the preference vector with the image creates a $5 \times 64 \times 64$ input tensor. As objectives we define binary cross-entropy for two easy and two hard tasks, utilizing insights of \citealp{sener2018multi}. As hard tasks we pick ``Oval Face'' and ``Pointy Nose'' (A25 and A27), as easy tasks ``Goatee'' and ``Mustache'' (A16 and A22). The hyperparameters for COSMOS are $\lambda = 3$ and $\alpha = 1$; the missclassification rate is computed by taking the values from the center ray (.5, .5) although more elaborated methods are available \cite{wang2017application}. For single task we average the individual scores, and we perform early stopping based on the validation set.

\section{Results}
\label{results}

\textbf{Fairness on tabular data and Multi-MNIST.}

The results of Table~\ref{tab:results_fair_mnist} show that COSMOS compares favorably to the baselines in terms of the quality of the Pareto front measured through the hyper-volume metric. We draw attention that COSMOS has only $5\%$ more parameters than the single-task network, whereas PHN has ca. $100$ times more parameters than the single-task network (see the appendice for details). However, due to the tiny size of the neural networks (2-layer MLP and LeNet) and the small datasets, this huge scalability gap is not reflected proportionally to the training time (a known inefficiency of GPU-based training for tiny models). Nevertheless, the empirical results indicate that COSMOS does not compromise the quality of the Pareto fronts, despite its scalability advantage, addressing \textit{Hypothesis 1}.

Furthermore, COSMOS is not outperformed by the Pareto front baselines (PHN-EPO, PHN-LS and ParetoMTL) on any of the 6 datasets in terms of hyper-volume. 
%Please note that the strong performance of the single task on the fairness datasets (Adult, Compass, Default) is an artifact of the problem, as there exist trivial solutions which can achieve perfect fairness scores, e.g. a constant classifier. We stress that the single-task method represents validation losses for each objective, achieved with two different models that are optimized for each objective separately. Therefore the single task baseline is always better on fairness, because unlike other MTL problems, the single-task optima are mutually exclusive. %We show the Pareto fronts in the appendix.
We ommit hyper-volume for the single task baseline on the fairness datasets because unlike MTL problems, the single-task optima are mutually exclusive. Additionally, there exist trivial solutions for fairness which can achieve perfect scores, e.g. a constant classifier.

For Multi-MNIST beeing a multi-task problem, we indeed outperform the single task baseline in all three datasets, which indicates that our method fulfills \textit{Hypothesis 2}. A further analysis of the Pareto fronts compared to the SOTA methods is shown in Figure~\ref{fig:vssota}, while a comparison to the single task method in Figure~\ref{fig:vssingle}. Moreover, Figure~\ref{fig:convergence} demonstrates that our method converges fast and is able to generate a well-spread Pareto front after 10 epochs.

\textbf{Ablation study.} To analyze the impact of the cosine similarity penalty, we ablate the effect of $\lambda$ and $\alpha$ on the Pareto by varying $\alpha \in (0.1, 0.7, 1.2)$ and $\lambda \in (1, 3, 5)$. Figure~\ref{fig:ablation} clearly demonstrates the crucial importance of the cosine penalty and answers \textit{Hypothesis 3}. In cases where $\lambda$ is small, the Pareto front is at the optimum although it is very narrow. Increasing the penalty yields a wider Pareto front. In contrast, the choice of $\alpha$ influences the quality of the Pareto front (not the width) via the sampling mechanism. For small $\alpha < 1$, we sample more at the vertices of the simplex yielding high-interest points dedicated to each loss. We provide ablations on the other datasets in the appendix. %Throughout our experiments we set the same sampling setting for all objectives within a problem $\alpha_0 = \alpha_1 = \hdots = \alpha_J$.

\textbf{CelebA.} We show the Pareto fronts for the easy and hard tasks in Figure~\ref{fig:celeba}. The EfficientNet model pretrained on Imagenet transfers very quickly on the Celeb-A dataset. COSMOS is capable of generating a well-spread Pareto front within a few optimization epochs and converges entirely after 25 epochs.

\begin{table}[htb!]
\caption{Results on CelebA}
\begin{center}
\begin{tabular}{l ccc }
\toprule
               & HV   & Mean MCR  & Time (h)\\ \midrule
& \multicolumn{3}{c}{\bf Hard Tasks}  \\ \cmidrule{2-4}
Single Task    & 2.222   & 24.66\%  & 3.08 \\ 
COSMOS         & 2.221   & 25.29\%  & 3.30   \\ 
& \multicolumn{3}{c}{\bf Easy Tasks} \\ \cmidrule{2-4}
Single Task    & 3.719   & 3.15\%  & 3.11 \\ 
COSMOS         & 3.706   & 3.47\%  & 3.30  \\ 

\bottomrule
\end{tabular}
\end{center}
\label{tab:celeba}
\end{table}

In addition, Table~\ref{tab:celeba} presents the comparative analysis against the single-task baseline. COSMOS is qualitatively comparable to the baseline, but in contrast generates a full Pareto front instead of a single point. Moreover, COSMOS computes the front within a small overhead of $7\%$ additional training time when both methods run for 25 epochs.  This confirms \textit{Hypothesis~4} and demonstrates that COSMOS offers a \textit{free lunch} Pareto front approximation, basically at the same time budget required to train a single network on a single task.

\begin{figure}[ht]
    \centering
    \includegraphics[width=0.98\columnwidth]{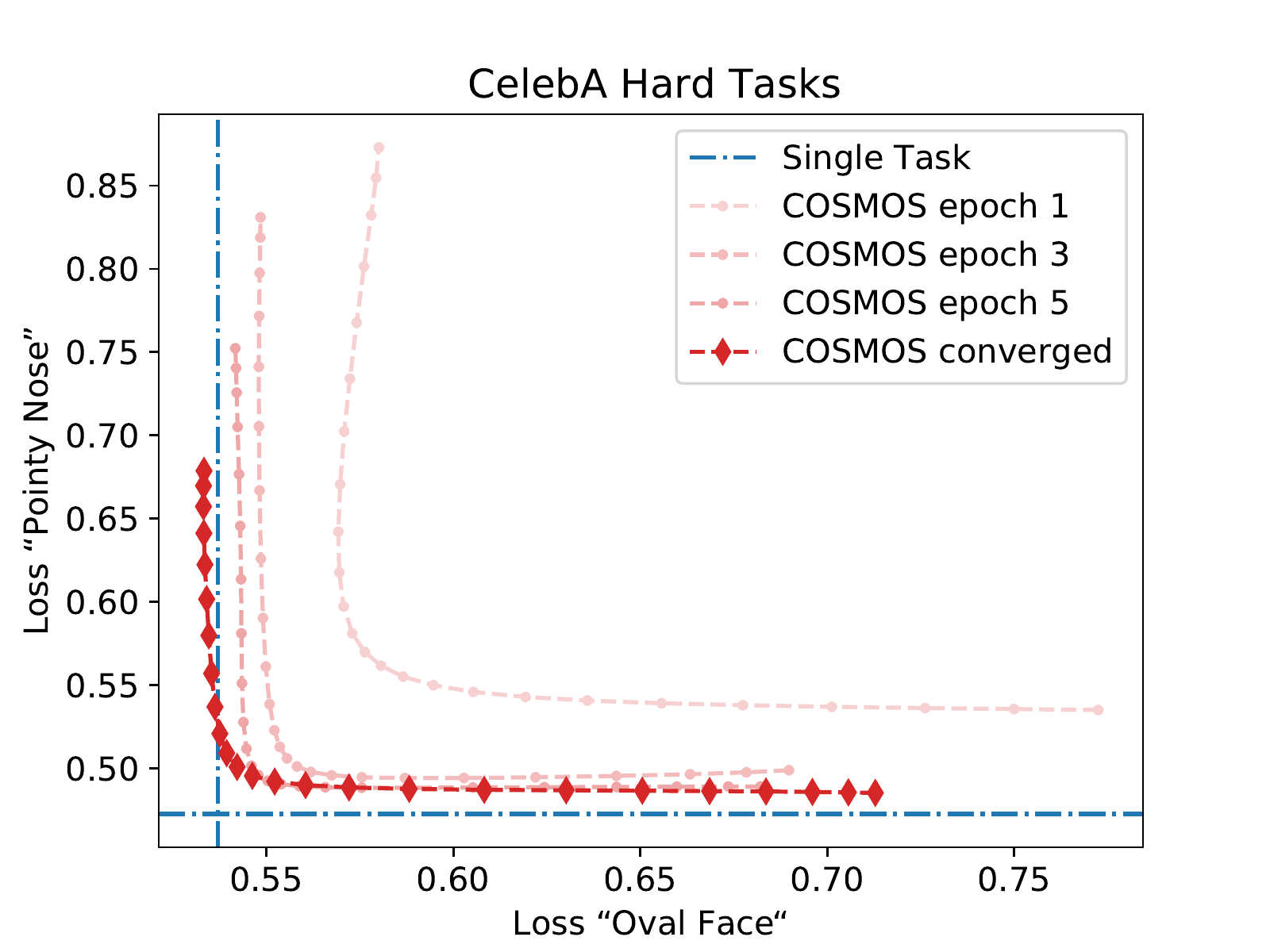}
    \includegraphics[width=0.98\columnwidth]{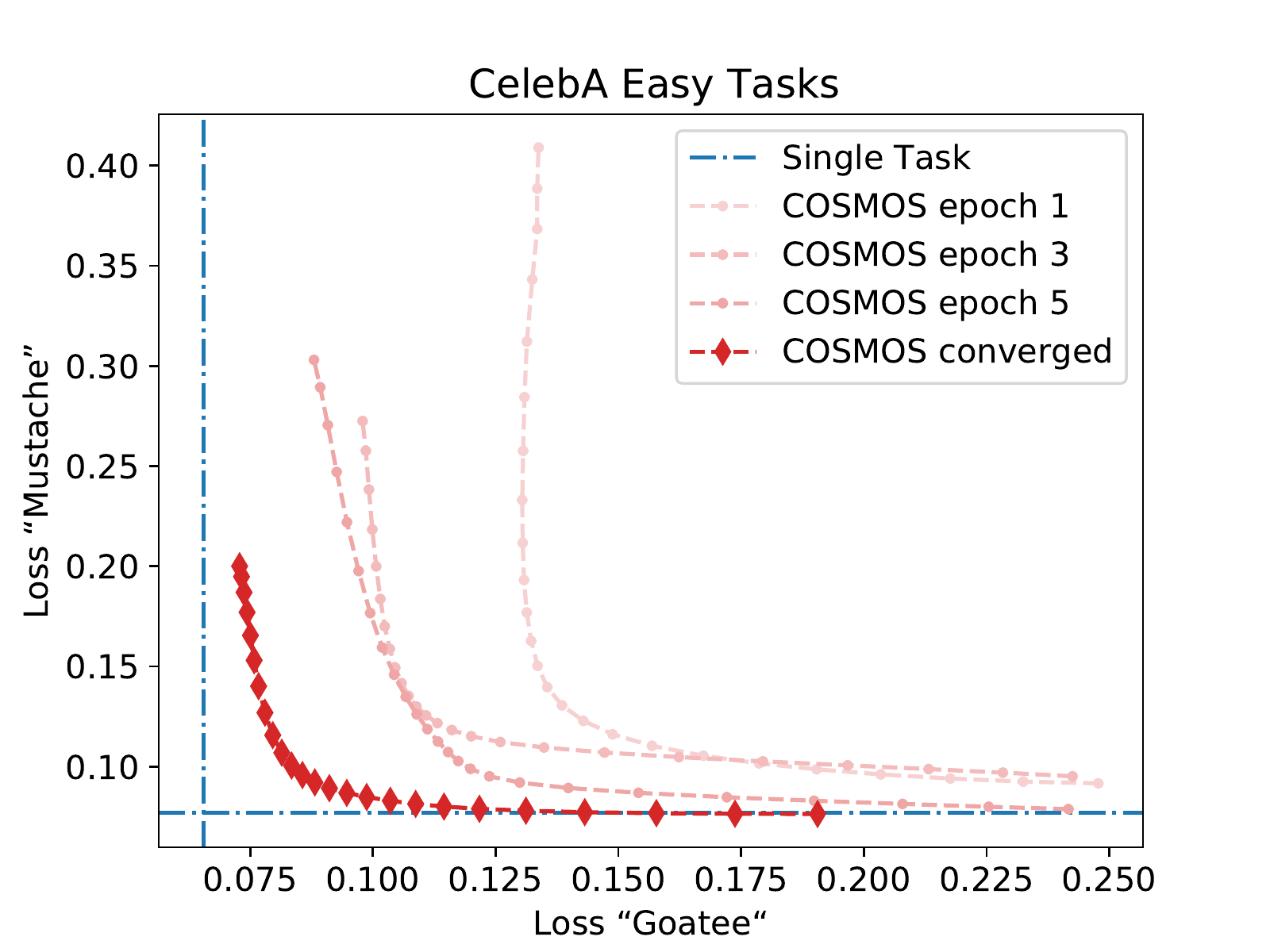}
    \vspace{-0.3cm}
    \caption{Pareto fronts and convergence on CelebA}
    \label{fig:celeba}
\end{figure}

\section{Limitations of our method}

We stress that the Pareto fronts by COSMOS are an approximation, because neural networks are not able to find the exact optimum of the linear scalarization problem. However, the provided illustrations of the approximated Pareto fronts indicate the Pareto-optimality of the achieved solutions for different preference vectors. Secondly, we highlight that conditioning the prediction network to the preference vectors cannot model objectives that do not depend on the input, e.g. an $L_p$ norm regularization objective on weights.

\section{Conclusions}
\label{conclusions}

Multi-objective Optimization is a crucial task for the Machine Learning community due to the wide array of real-life tasks which are intrinsically multi-objective. Unfortunately, there exist no scalable MOO method for deep neural networks which is able to generate a Pareto front of the solutions, in a way that a practitioner can decide on the trade-offs of the competing objectives. Existing papers either relied on learning one network per solution, or deploying hyper-networks which introduce a high number of trainable parameters. In contrast, throughout this paper we introduced COSMOS, a method that is able to produce qualitative approximations of the Pareto front at a very small training time overhead compared to the single-objective optimization of one neural network. We condition the prediction model to the choice of the preference vector for the objectives and solve a regularized version of the linear scalarization problem, in a manner that a single network learns to adjust the predictions based on the inputted trade-off preference for all the objectives. Furthermore, we penalize the Pareto fronts to be widely spread by minimizing the angle between achieved solutions in the space of objectives and the inputted preference vectors. Overall, a single optimization run is required to optimize our multi-objective approach, which is asymptotically equal to the optimization of a single-objective task. In a series of experiments, we demonstrated that the approximated Pareto fronts are competitive against state-of-the-art baselines. Furthermore, we showed that COSMOS approximates a Pareto front for the CelebA dataset using the EfficientNet model with only 7\% of training time overhead compared to single objective optimization tasks. 

% \section*{Acknowledgements}
% Anonymous.

\vfill
\newpage
\bibliography{references}
\bibliographystyle{icml2021}

\newpage
\onecolumn

% \documentclass{article}

% % Recommended, but optional, packages for figures and better typesetting:
% \usepackage{microtype}
% \usepackage{graphicx}
% \usepackage{subfigure}
% \usepackage{booktabs} % for professional tables
% \usepackage{multirow}
% \usepackage{multicol}

% \input{math_commands}
% \usepackage{amsthm}
% \usepackage{amssymb}
% \usepackage{enumitem}

% \usepackage[algo2e,ruled,vlined,linesnumbered]{algorithm2e}

% \newtheorem{theorem}{Theorem}
% \newtheorem{definition}{Definition}[section]

% \begin{document}

\appendix

\section{More objectives on CelebA}

\paragraph{Three to four objectives.}
We also tried more than two objectives on CelebA using EfficientNet-B4. Here, we report the individual task errors in Table~\ref{tab:celeba_3obj}. As in the main paper we use the ``middle ray'' $\frac{1}{J} \forall j \in \{1, 2, \hdots J\}$ to calculate the Missclassification Rates (MRC) for each task due to its simplicity. Please note that the MCRs do not represent the whole Pareto front but rather one particular point of it. As tasks we use ``Goatee'', ``Mustache'', ``No Beard'', ``Pale Skin'' in this order. We do not report the hypervolume as obtaining evenly distributed points on a ($J-1$)-sphere is not trivial for $J>3$. We again achieve similar performance compared to the single task baseline using the same setting as in the main paper.

The Pareto front for the first three objectives is shown in Figure~\ref{fig:celeba_3obj}. It was obtained using 25 test rays evenly distributed on a 2-sphere using the Fibonacci sphere algorithm.

\paragraph{Ten random objectives.}
To demonstrate that we did not cherry-pick the tasks we also evaluate COSMOS on 10 randomly picked tasks and report the MCR compared to single task. The tasks are: ``Black Hair'', ``Wearing Lipstick'', ``Bald'', ``Goatee'', ``Big Nose'', ``Smiling'', ``Receding Hairline'', ``Sideburns'', ``No Beard'', ``Chubby''. We report the results in Table~\ref{tab:celeba_10obj}. We use the same setting as in the main paper but increased the training epochs to 30.

\begin{figure*}[htb!]
    \centering
    \includegraphics[width=0.48\textwidth]{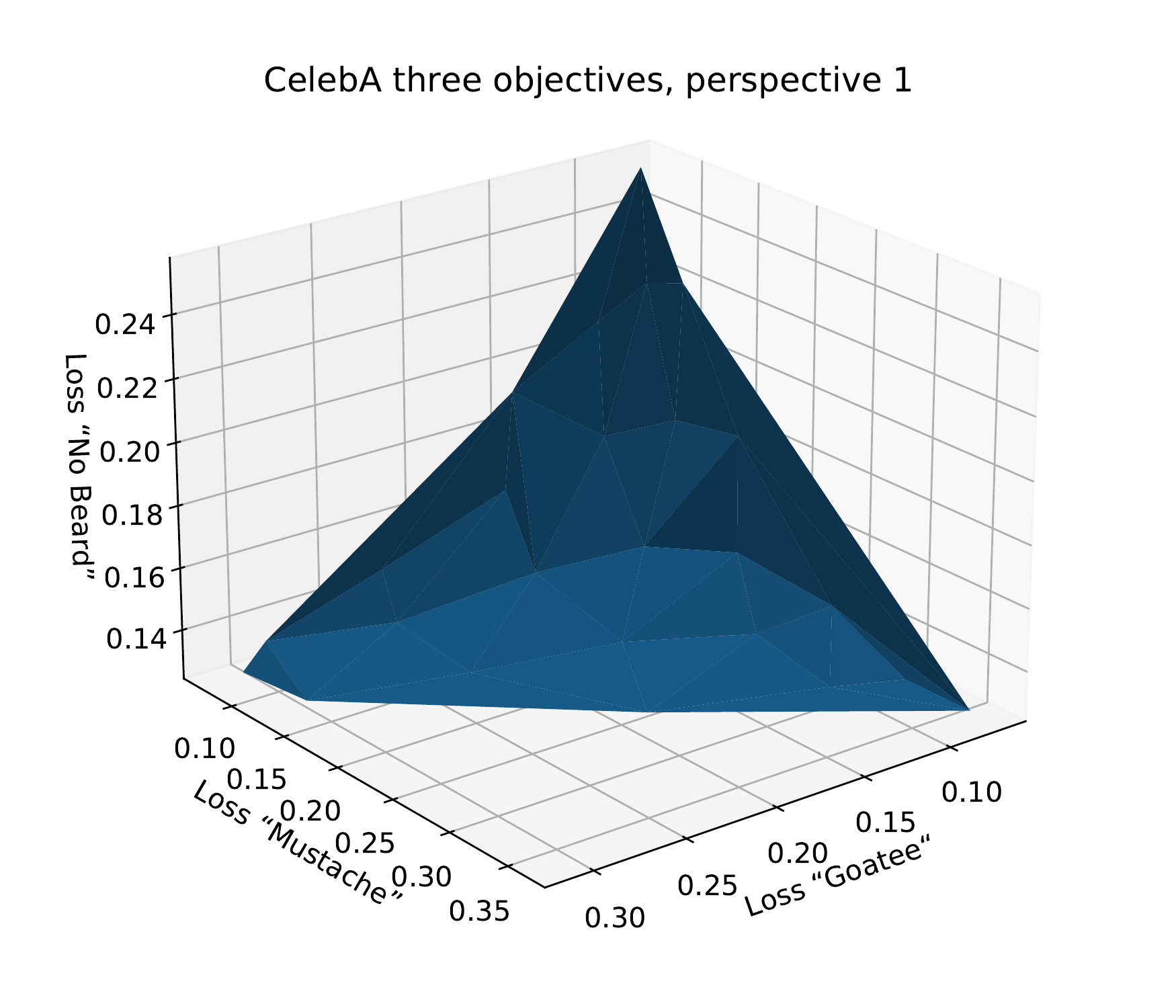}
    \includegraphics[width=0.48\textwidth]{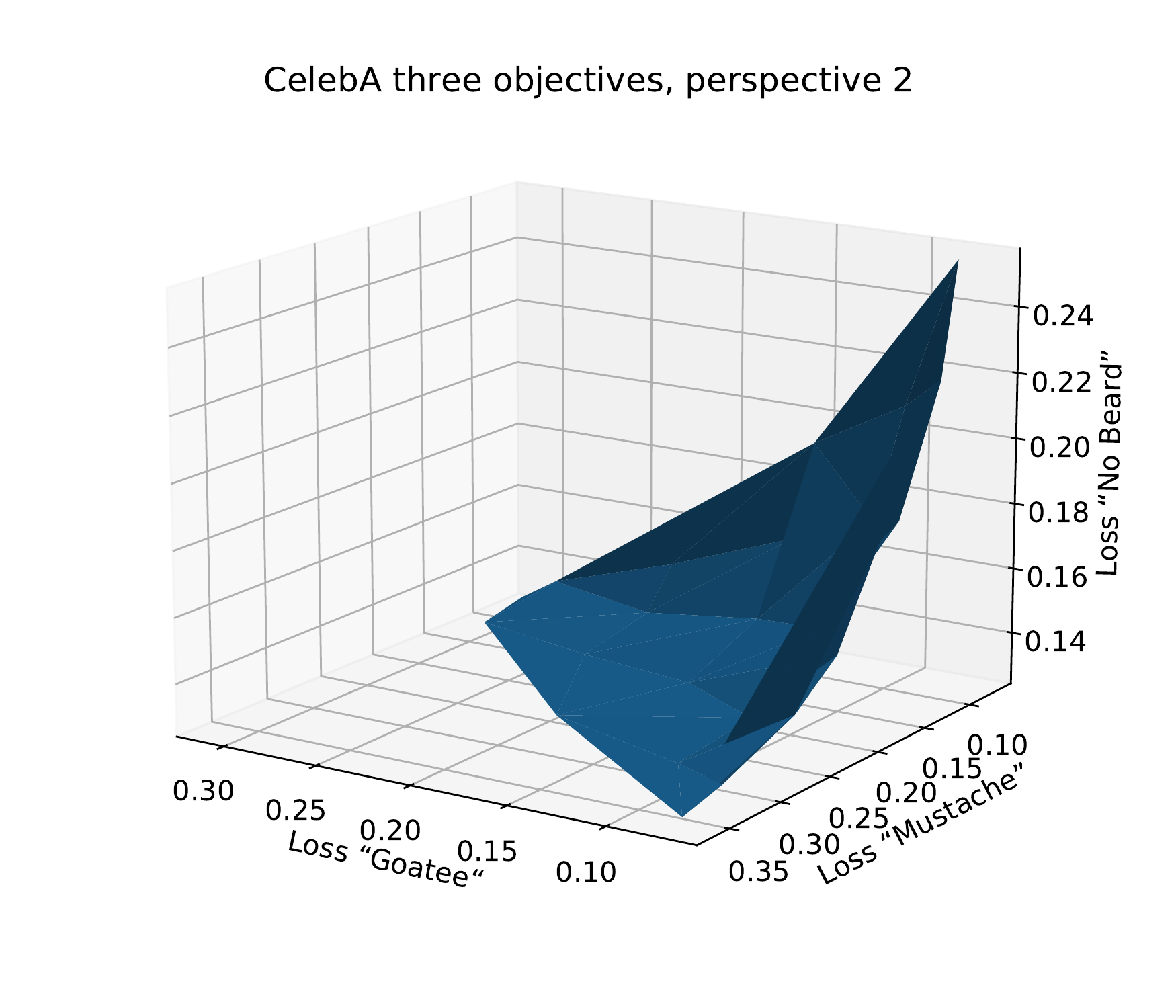}
    \caption{Pareto front found by COSMOS on CelebA with three tasks}
    \label{fig:celeba_3obj}
\end{figure*}

\begin{table}[htb!]
\caption{Results on CelebA with up to 4 objectives}
\begin{center}
\begin{tabular}{l cccc }
\toprule
               & MCR Task 1   & MCR Task 2  & MCR Task 3 &  MCR Task 4   \\ \midrule
2 Tasks        & 3.07\%       & 3.88\%      & -          & - \\ 
3 Tasks        & 3.16\%       & 3.93\%      & 5.28\%     & - \\ 
4 Tasks        & 3.53\%       & 3.59\%      & 5.40\%     & 3.14\% \\
Single Task    & 2.87\%       & 3.30\%      & 4.82\%     & 3.00\% \\

\bottomrule
\end{tabular}
\end{center}
\label{tab:celeba_3obj}
\end{table}

\begin{table}[htb!]
\caption{Results on CelebA with 10 objectives}
\begin{center}
\begin{tabular}{l cc }
\toprule
             & Single Task   & Cosmos     \\ \midrule
Task 1 MCR   & 10.90\%       & 11.63\%    \\ 
Task 2 MCR   & 6.55\%        & 8.87\%     \\ 
Task 3 MCR   & 1.19\%        & 2.00\%     \\ 
Task 4 MCR   & 2.87\%        & 3.24\%     \\ 
Task 5 MCR   & 18.18\%       & 18.06\%    \\ 
Task 6 MCR   & 7.52\%        & 8.84\%     \\ 
Task 7 MCR   & 6.72\%        & 6.87\%     \\ 
Task 8 MCR   & 2.56\%        & 3.43\%     \\ 
Task 9 MCR   & 4.82\%        & 6.05\%     \\ 
Task 10 MCR  & 4.59\%        & 5.26\%     \\ 

\bottomrule
\end{tabular}
\end{center}
\label{tab:celeba_10obj}
\end{table}

\section{Intuition behind the Cosine Similarity penalty}

We would also like to add two illustrations that demonstrate the intuition behind deploying the cosine similarity as a penalty. Figure~\ref{fig:scalarization_issue} shows the difference between the Pareto front found using regular linear scalarization and what is actually desired.

Figure~\ref{fig:angel_min} shows how the penalty ensuring a small angle between the loss vector $\ell$ and the preference vector $\vec r$ pushes the solution closer towards the desired position while still minimizing both losses. This way minimizing the losses as well as reaching a position on the Pareto front are balanced during optimization.

\begin{figure*}[ht!]
    \centering
    \includegraphics[trim=0 0 0 60, clip, width=1.0\textwidth]{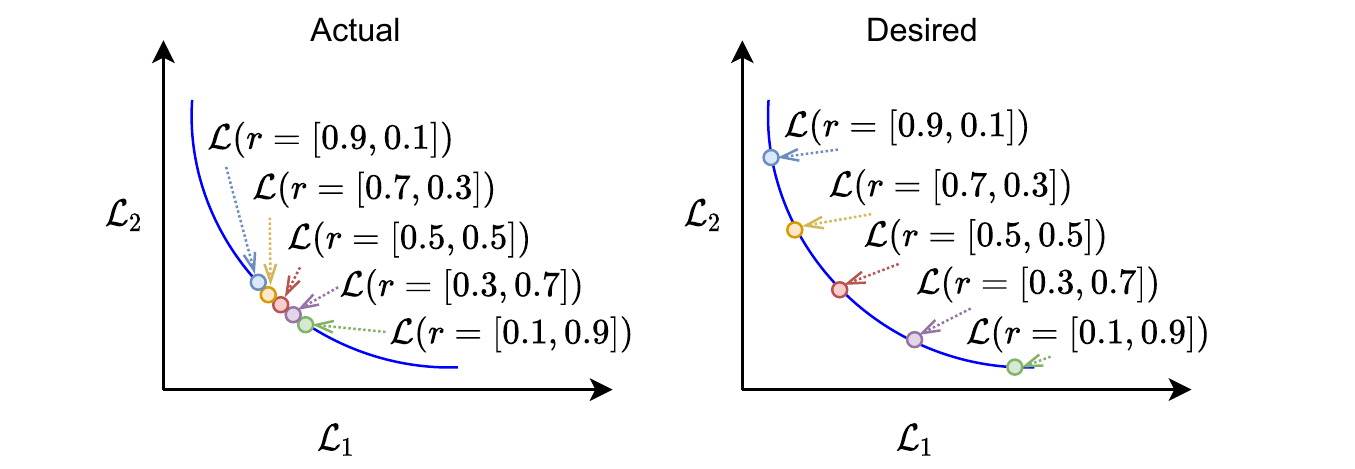}
    \vspace{-0.3cm}
    \caption{Pareto front found by linear scalarization (left) vs. desired (right)}
    \label{fig:scalarization_issue}
\end{figure*}

\begin{figure*}[ht!]
    \centering
    \includegraphics[trim=0 0 0 60, clip, width=1.0\textwidth]{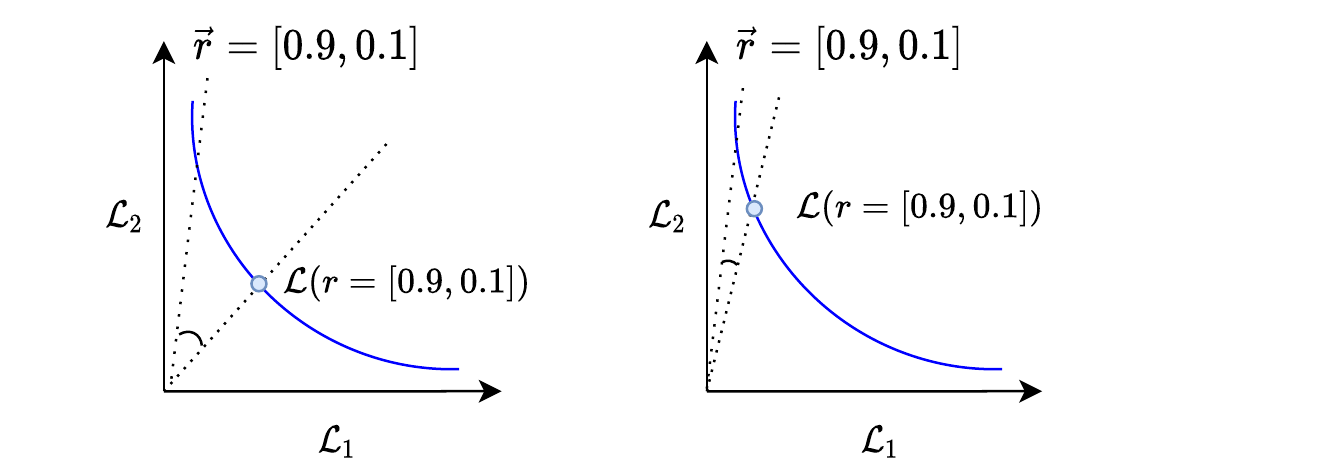}
    \vspace{-0.3cm}
    \caption{The effect of angle minimization on achieving a wide spread Pareto front. }
    \label{fig:angel_min}
\end{figure*}

\section{Additional figures}

See Figure~\ref{fig:vssingle} for a comparison of COSMOS to the single task baseline for the fairness  datasets.

See Figures~\ref{fig:ablation} for an ablation of $\lambda$ and $\alpha$ on the datasets ommitted in the main paper. For the fairness datasets we ablate $\lambda \in \{0, 0.01, 0.1 \}$. They demonstrate the same behaviour as for Multi-MNIST, shown in the paper, although the cosine similarity seems not as important for fairness as for the MTL tasks. This is plausible due to the large difference in scales.

\begin{figure*}[ht!]
    \centering
    \includegraphics[width=0.95\textwidth]{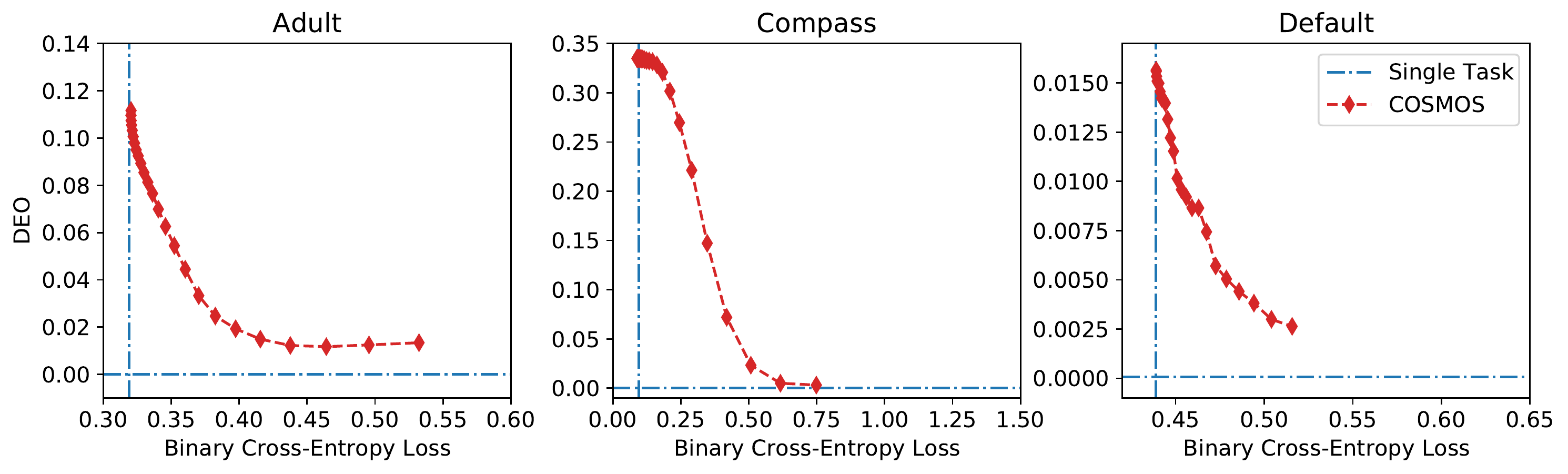}
    \vspace{-0.3cm}
    \caption{Comparison of COSMOS against single-objective baselines}
    \label{fig:vssingle}
\end{figure*}

\begin{figure*}[ht!]
    \centering
    \includegraphics[width=0.93\textwidth]{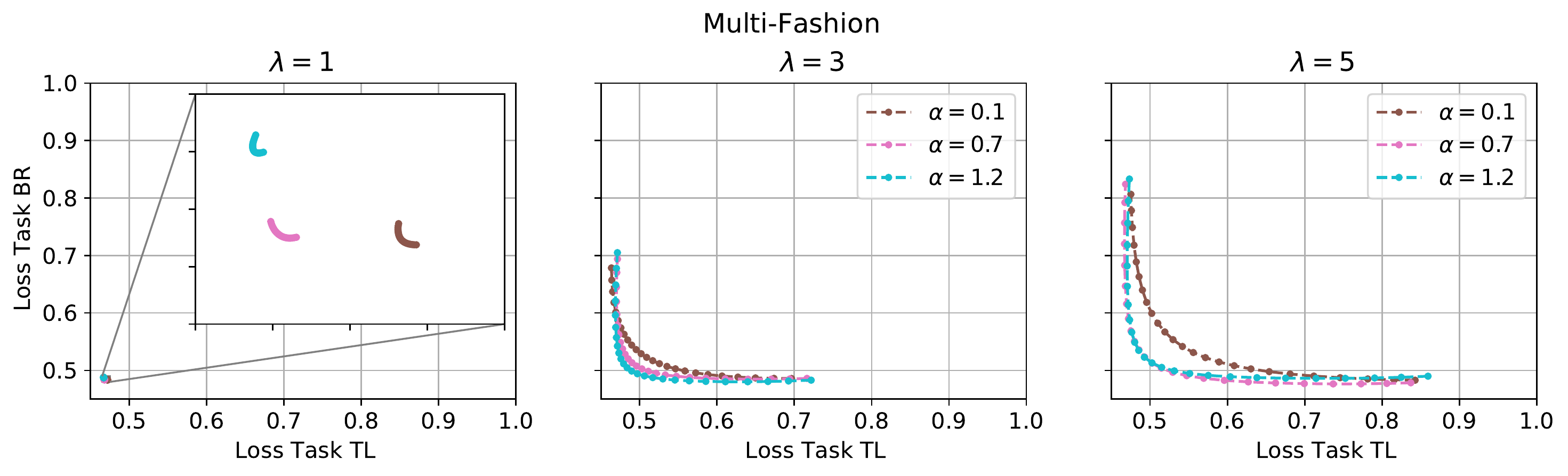}
    \includegraphics[width=0.93\textwidth]{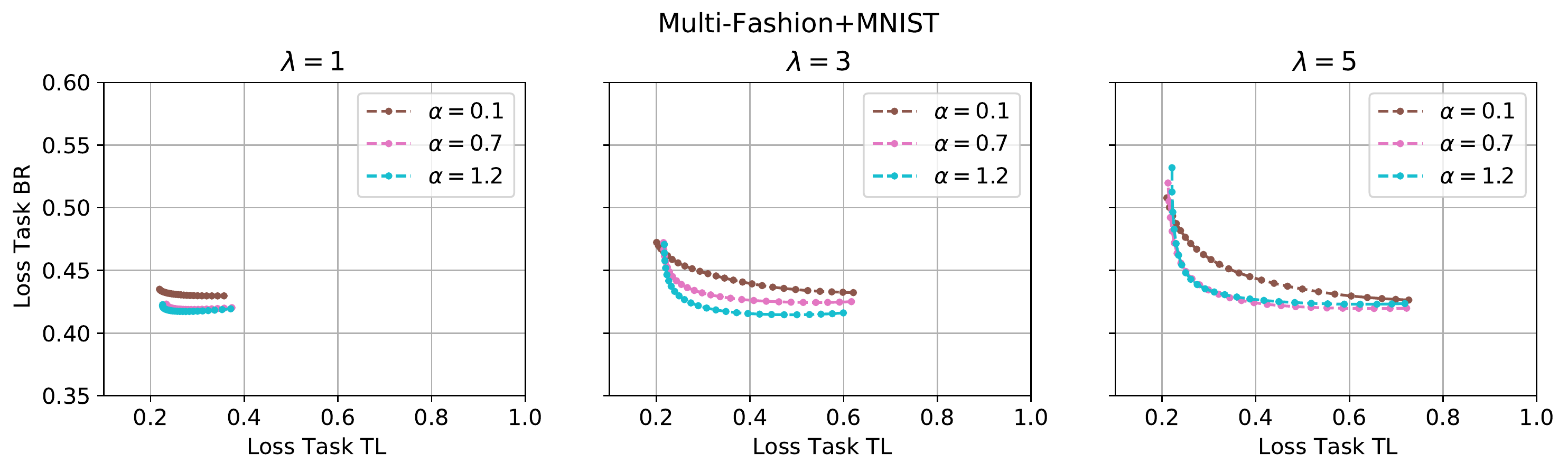}
    \includegraphics[width=0.93\textwidth]{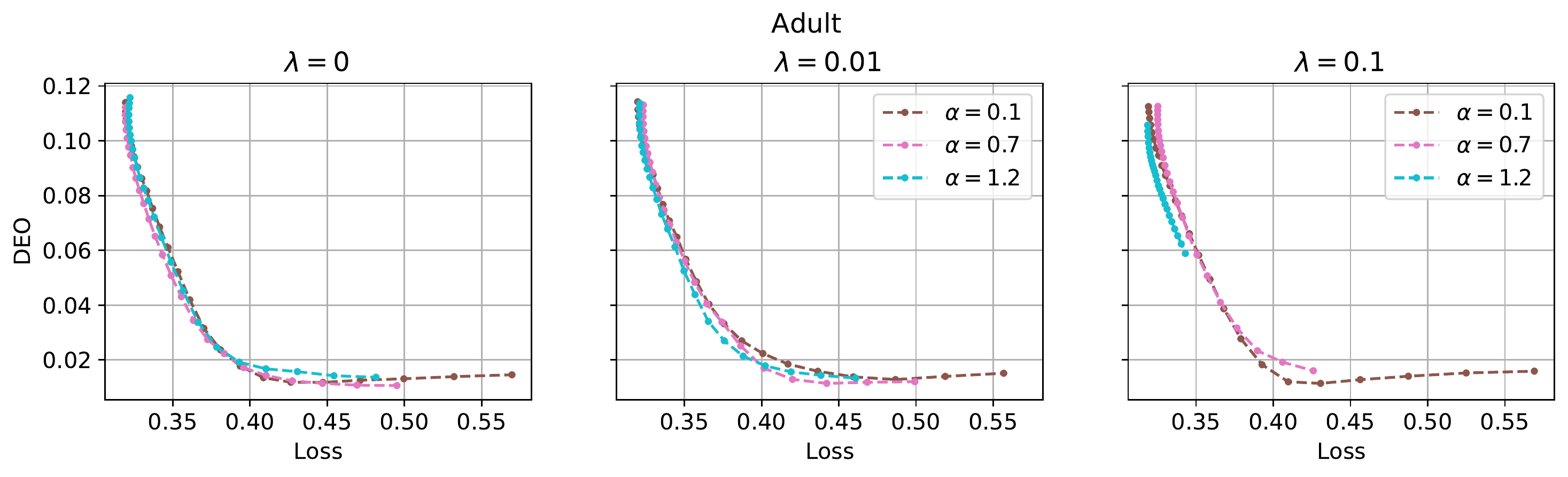}
    \includegraphics[width=0.93\textwidth]{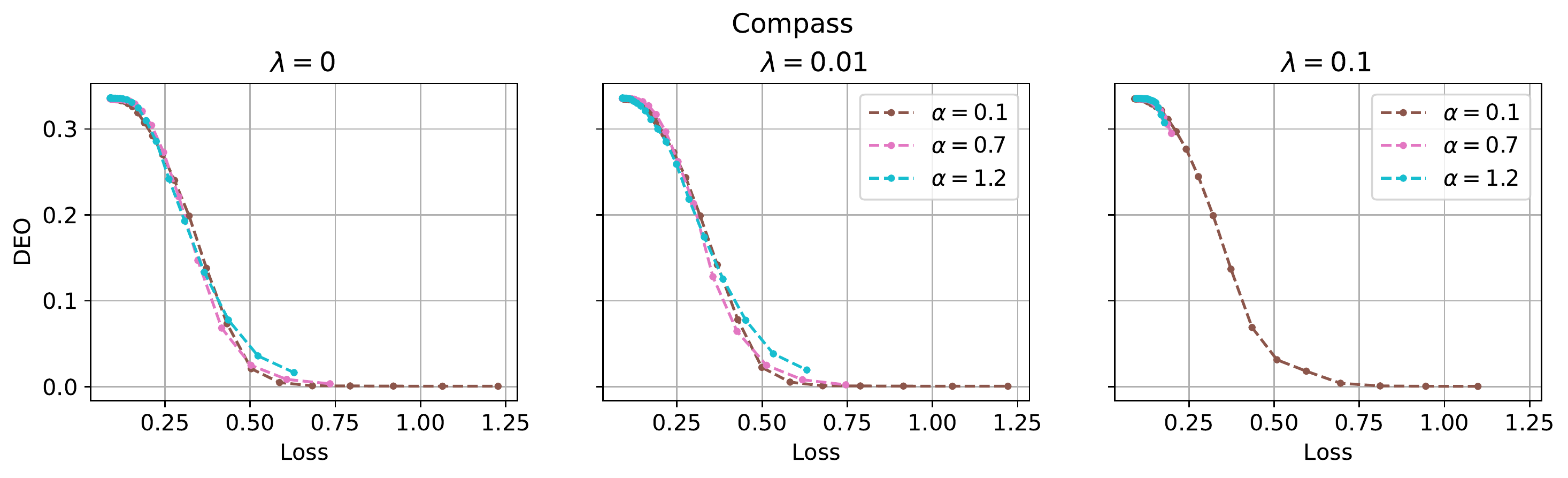}
    \includegraphics[width=0.93\textwidth]{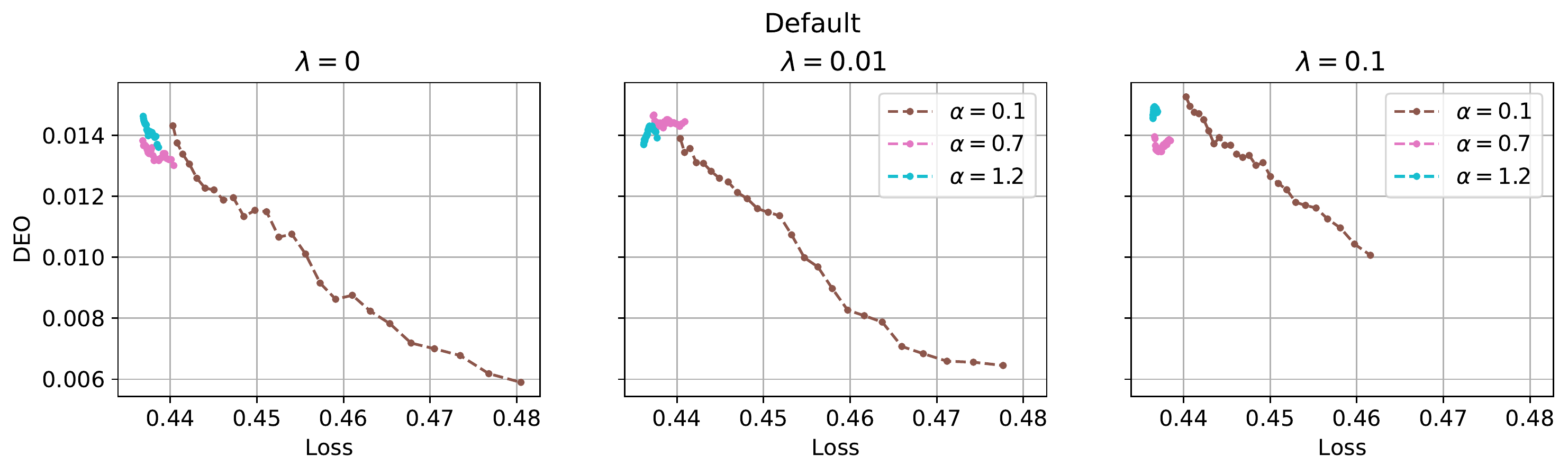}
    \vspace{-0.3cm}
    \caption{Ablation of the impact of the cosine similarity penalty and the Dirichlet sampling parameter.}
    \label{fig:ablation}
\end{figure*}

\section{More detailed result tables}

For more detailed results on the three fairness and image classification datasets see Table~\ref{tab:fairness}.

\begin{table}[htb!]
\begin{center}
\caption{Detailed results on fairness and image classification datasets}
\label{tab:fairness}
\begin{tabular}{l cr r}
\toprule
                & Hyper Vol. & Time (Sec) & \# Params. \\ \midrule
                & \multicolumn{3}{c}{\bf Adult} \\ \cmidrule{2-4}
Single Task    & 3.36 $\pm$ 0.01        & 48          &  6k \\ 
PHN-EPO    & 3.34 $\pm$ 0.00        & 59          &  716k \\ 
PHN-LS    & 3.34 $\pm$ 0.00        & 23          &  716k \\ 
ParetoMTL    & 2.90 $\pm$ 0.05        & 539          &  6k \\ 
COSMOS    & 3.34 $\pm$ 0.01        & 31          &  7k \\ 
                & \multicolumn{3}{c}{\bf Compass} \\ \cmidrule{2-4}
Single Task    & 3.81 $\pm$ 0.00        & 16          &  2k \\ 
PHN-EPO    & 3.71 $\pm$ 0.01        & 17          &  304k \\ 
PHN-LS    & 3.71 $\pm$ 0.01        & 7          &  304k \\ 
ParetoMTL    & 2.15 $\pm$ 0.37        & 83          &  2k \\ 
COSMOS    & 3.72 $\pm$ 0.01        & 17          &  2k \\ 
                & \multicolumn{3}{c}{\bf Default} \\ \cmidrule{2-4}
Single Task    & 3.12 $\pm$ 0.00        & 34          &  7k \\ 
PHN-EPO    & 3.11 $\pm$ 0.01        & 27          &  728k \\ 
PHN-LS    & 3.12 $\pm$ 0.00        & 18          &  728k \\ 
ParetoMTL    & 3.10 $\pm$ 0.00        & 334          &  7k \\ 
COSMOS    & 3.12 $\pm$ 0.00        & 35          &  7k \\ 
                & \multicolumn{3}{c}{\bf Multi-MNIST} \\ \cmidrule{2-4}
Single Task    & 2.85 $\pm$ 0.01        & 391          &  42k \\ 
PHN-EPO    & 2.83 $\pm$ 0.03        & 1,554          &  3,243k \\ 
PHN-LS    & 2.83 $\pm$ 0.03        & 636          &  3,243k \\ 
ParetoMTL    & 2.90 $\pm$ 0.01        & 4,087          &  42k \\ 
COSMOS    & 2.94 $\pm$ 0.02        & 501          &  43k \\ 
                & \multicolumn{3}{c}{\bf Multi-Fashion} \\ \cmidrule{2-4}
Single Task    & 2.23 $\pm$ 0.01        & 421          &  42k \\ 
PHN-EPO    & 2.20 $\pm$ 0.03        & 1,852          &  3,243k \\ 
PHN-LS    & 2.20 $\pm$ 0.03        & 700          &  3,243k \\ 
ParetoMTL    & 2.24 $\pm$ 0.02        & 4,210          &  42k \\ 
COSMOS    & 2.32 $\pm$ 0.01        & 379          &  43k \\ 
                & \multicolumn{3}{c}{\bf Multi-Fashion+MNIST} \\ \cmidrule{2-4}
Single Task    & 2.77 $\pm$ 0.02        & 398          &  42k \\ 
PHN-EPO    & 2.78 $\pm$ 0.02        & 1,715          &  3,243k \\ 
PHN-LS    & 2.76 $\pm$ 0.05        & 723          &  3,243k \\ 
ParetoMTL    & 2.72 $\pm$ 0.02        & 4,142          &  42k \\ 
COSMOS    & 2.83 $\pm$ 0.03        & 498          &  43k \\ 
\bottomrule
\end{tabular}
\end{center}
\end{table}

\section{Experimental Setup (further details)}

We ran all experiments on a Nvidia GeForce RTX 2080 GPU with 12GB RAM. As seeds we used $1, 2, 3, 4, 42$.

\paragraph{Fairness on tabular data.}
After preprocessing the datasets have the following dimensions: Adult $48,842 \times 88$, Compass $6,172 \times 20$, Default $30,000 \times 90$. For HPN where we use learning rate 0.0001 set as default by the authors.

\paragraph{Multi-MNIST.}
For ParetoMTL we decay at 15,30,45,60,75,90 it to $\gamma = 0.5$ following the authors' implementation. We train HPN for 150 epochs with learning rate 0.0001, no scheduler and $\alpha=0.2$, again following their implementation.

\paragraph{CelebA.}
To transform the images we first resize so the smaller edge matches 64 pixels. Then we crop in the center and normalize to 0.5 mean and 0.5 standard deviation. We remove all Batchnorm layers.

% \end{document}

\end{document}